\documentclass{article}
\usepackage{iclr2027_conference,times}
\iclrfinalcopy
\usepackage{amsmath,amssymb,amsthm}
\usepackage{graphicx,booktabs,array}
\usepackage{wrapfig}
\usepackage{xcolor}
\usepackage{hyperref}
\usepackage{url}
\hypersetup{pdfauthor={Chen Liang, Daoping Wu, Donghua Yang, Zhiyu Liang, Ying Zhang, Shuguang Xiong, Hongzhi Wang},pdftitle={DeltaTrace: Efficient Signed Attribution for Reasoning Language Models},colorlinks=true,linkcolor=blue!45!black,citecolor=blue!45!black,urlcolor=blue!45!black}

\title{DeltaTrace: Efficient Signed Attribution\\for Reasoning Language Models}
\author{
\textbf{Chen Liang$^{1,2}$ \quad Daoping Wu$^{2}$ \quad Donghua Yang$^{1}$ \quad Zhiyu Liang$^{1}$}\\
\textbf{Ying Zhang$^{2}$ \quad Shuguang Xiong$^{2}$ \quad Hongzhi Wang$^{1}$}\\[3pt]
{\normalfont $^{1}$Harbin Institute of Technology \quad $^{2}$Zhejiang Lab}\\[3pt]
{\normalfont\footnotesize\texttt{23B903050@stu.hit.edu.cn \quad daopingwu@outlook.com \quad yang.dh@hit.edu.cn}}\\
{\normalfont\footnotesize\texttt{\{zyliang,wangzh\}@hit.edu.cn \quad \{zhangy,xiongshuguang\}@zhejianglab.org}}
}
\newcommand{\dt}{\textsc{DeltaTrace}}
\newcommand{\D}{\Delta}
\newcommand{\R}{\mathbb{R}}
\newcommand{\ip}[2]{\left\langle #1,#2\right\rangle}
\newcommand{\sm}{\operatorname{softmax}}
\newcommand{\diag}{\operatorname{diag}}
\newcommand{\silu}{\operatorname{SiLU}}

\newtheorem{theorem}{Theorem}
\newtheorem{proposition}{Proposition}
\definecolor{support}{HTML}{176B87}
\definecolor{oppose}{HTML}{B25332}
\definecolor{ink}{HTML}{263442}

\begin{document}
\maketitle
\lhead{Preprint}
\suppressfloats[t]
\begin{abstract}

We study how words in the context support or oppose a language model's response and present \dt{}, a novel attribution framework that explains the effect of input tokens on a fixed model response. \dt{} measures how the response's log-probability changes when the context tokens being explained are masked. To distribute this difference among source tokens, it constructs operator-specific local rules for finite changes and composes them backward through the computational graph using effective secant slopes. Each token is assigned a signed attribution as the inner product of its embedding displacement and the corresponding backward-propagated slope. Local conservation makes these contributions sum to the measured score difference and preserves their meaning when aggregated over sentences or passages. \dt{} supports both attention and linear-attention models, with auxiliary memory growing linearly with sequence length at fixed model dimensions and chunk size. On the evaluated attention model, \dt{} achieves the lowest deletion-faithfulness scores (RISE and MAS) on most retrieval and reasoning tasks and the highest needle-in-a-haystack Recall, exceeding FlashTrace by 5.67-24.33 percentage points. On the evaluated hybrid attention and linear-attention model, \dt{} lowers RISE on 9 tasks and MAS on 12 relative to FlashTrace.
\end{abstract}

\section{Introduction}
\label{sec:introduction}
\begin{wrapfigure}[24]{r}{2.70in}
\vspace{-4.25\baselineskip}
\centering
\includegraphics[width=2.70in]{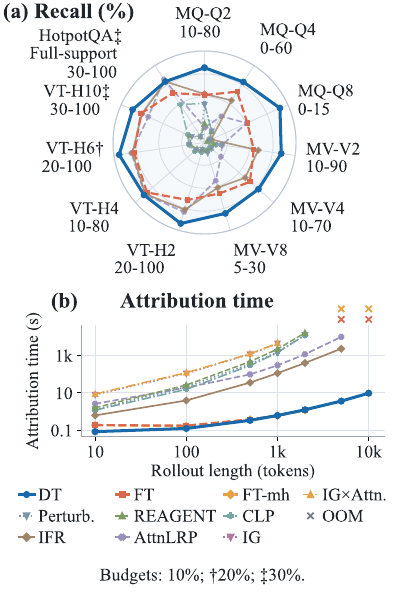}
\vspace{-\intextsep}
\end{wrapfigure}

As people increasingly use large language models (LLMs) to support decisions, they need to know what information the models rely on. Input attribution helps answer this question by tracing a response back to the words in its context. A reasoning response depends on both the information available and the computations that select and combine it \citep{ferrando2024information}. Figure~\ref{fig:overview}(d) shows a passage containing a playwright, William Shakespeare, and a composer, William Walton. The names supply candidate answers, and the phrase ``the author of the play'' specifies which role to use. Attribution should account for how both the candidate names and the requested role affect the response. It should also distinguish words that support the response from words that oppose it.

We ask how masking selected context tokens changes a language model's score for a fixed response. We compare the original prompt with a reference in which the selected tokens are replaced by EOS, while the response and its preceding tokens remain fixed. The resulting score difference is divided into signed contributions for the selected tokens.

\begin{figure}[t]
\centering
\input{figures/overview}
\caption{\textbf{DeltaTrace assigns a response-score difference to input tokens.} (a) Two executions score the same fixed response $y$, with source tokens kept in the original input and masked with EOS in the reference. Each operation's formula and values from both runs determine local maps $D_f$ that express output changes as weighted sums of input changes. Applying their transposes in reverse yields $m$, the effective secant slopes from the indicated activations to the response log-probability $F$. At the inputs, multiplying these slopes by embedding displacements and summing within each token gives contributions whose total recovers the measured $\Delta F$. $\Delta$ denotes original minus reference. (b, c) Attention and linear attention (Gated DeltaNet) are examples within this reverse pass. Gray arrows show forward operations, and purple arrows return these slopes to their inputs. Other operations appear in Appendix~\ref{app:operators}. (d) Positive and negative scores identify support and opposition. Name annotations sum token scores, colors saturate at $\pm2.5$ nats, and ellipses omit text. The target is the full stored response plus EOS.}
\label{fig:overview}
\end{figure}

Attribution methods can explain prediction differences relative to a reference \citep{sundararajan2017axiomatic,shrikumar2017deeplift}. Relevance propagation and representation decomposition trace information through attention and recurrent states \citep{ali2022conservative,achtibat2024attnlrp,jafari2024mambalrp,modarressi2023decompx,pitorro2025latim}. FlashTrace follows source importance through multi-token responses \citep{pan2026flashtrace}. For the fixed-response question above, the target is a finite log-probability difference. Computing token contributions requires relating the response-score difference to activation differences throughout the model.

Existing attribution methods often propagate local derivatives or relevance. For the counterfactual above, this leaves a gap. Masking creates a finite score change, and several operands can change together inside one operation. In attention, both the weights and values change. In Gated DeltaNet, values, keys, queries, and gates jointly affect memory updates and reads \citep{yang2024gated}. The missing rule is one that expresses the observed output change in terms of these joint input changes before carrying it backward.

\textbf{\dt{}} traces the finite score difference $\D F$ back to its source tokens. It modifies backpropagation because derivatives evaluated at the original execution generally cannot reconstruct the finite response-score change caused by masking. From each operation's formula and the two executions, it constructs local finite-change rules and composes them in one reverse pass into effective secant slopes from input embeddings to the response score. Just as a secant slope converts horizontal displacement into vertical displacement, multiplying each embedding component's change by its effective slope gives its signed share of the score change. Summing these shares within a token gives its attribution; local conservation makes the total across all tokens equal the measured difference and preserves this interpretation when tokens are grouped into sentences or passages. Figure~\ref{fig:overview}(a) shows the complete method, while (b) and (c) illustrate its backward calculations for two operations.

\paragraph{Contributions.}
\begin{itemize}
\item \textbf{Signed evidence for the reasoning and answer together.} DT combines a source token's effects across response positions into one contribution to the full response's score difference. The signed scores identify support and opposition and can be summed to explain sentences or passages (Section~\ref{sec:reverse}).
\item \textbf{One procedure for attention and linear attention.} We derive backward calculations for attention products, normalization, and Gated DeltaNet as examples of the operations handled by \dt{}. These calculations account for simultaneous changes in their inputs, allowing the same attribution procedure to follow attention and recurrent memory (Sections~\ref{sec:attention}--\ref{sec:memory}).
\item \textbf{Auxiliary storage linear in sequence length.} FlashAttention-style tiles \citep{dao2022flashattention} and Flash Linear Attention (FLA) chunks \citep{yang2024fla} bound the workspace used for intermediate products. At fixed model dimensions and chunk size, longer inputs and responses require only linear growth in auxiliary storage (Section~\ref{sec:computation}).
\end{itemize}

On Qwen3-8B retrieval and reasoning tasks, \dt{} achieves the lowest deletion-faithfulness scores on most tasks and the highest evidence recovery across the six needle-in-a-haystack tasks (Section~\ref{sec:evaluation}). On Qwen3.5-9B, \dt{} lowers RISE on 9 of 13 tasks and MAS on 12 relative to FlashTrace.

\section{DeltaTrace}
\label{sec:method}

\subsection{The counterfactual question and attribution target}
We explain how masking selected prompt tokens changes the score of a fixed target span. The inputs are a trained language model with fixed parameters $\theta$, a prompt $x$ containing the context and question, and a stored response $y=(y_1,\ldots,y_L)$ of $L$ tokens. Let $\mathcal{T}$ be the positions of the target span within $y$. Both executions score these positions using the same preceding response tokens. The scalar target score is
\begin{equation}
F(x;y)=\sum_{t\in\mathcal{T}}\log p_\theta(y_t\mid x,y_{<t}),
\label{eq:target}
\end{equation}
where $p_\theta$ is the model's next-token probability and $y_{<t}$ is the response prefix before position $t$. A higher score means the model assigns greater probability to the target span given its preceding tokens. For full-response attribution, $y$ includes the reasoning, answer, and end-of-sequence (EOS) token, and $\mathcal{T}=\{1,\ldots,L\}$. The response tokens and target positions stay fixed, while their hidden representations can change as they read the different prompts.

The source positions are the context or question tokens chosen for explanation. Let $x_1$ be the original prompt. Its reference $x_0$ masks all source tokens together with EOS at the same positions (Figure~\ref{fig:overview}(a)), keeping all other prompt tokens fixed. Both executions process the full input sequence and score the same target span, giving
\[
\D F=F(x_1;y)-F(x_0;y).
\]
A positive $\D F$ means masking these tokens lowers the target score. The attribution task is to divide this difference among the source tokens. Each token receives a signed contribution $A_i$, and adding the contributions recovers $\D F$. Positive and negative scores indicate support and opposition under this reference (Figure~\ref{fig:overview}(d)).

Subscripts $0$ and $1$ denote reference and original activations, and $\D U=U_1-U_0$ denotes their difference. For source token $i$, $e_{i,0}$ is the EOS embedding and $e_{i,1}$ is its original embedding, so $\D e_i=e_{i,1}-e_{i,0}$. Tokens held fixed have zero embedding difference. The reverse pass converts these embedding differences into token contributions.

\subsection{From the score difference to token contributions}
\label{sec:reverse}
\dt{} replaces tangent slopes with secant slopes to trace the measured output change back to the inputs. For a scalar activation, the secant connects its reference and original points: its slope converts the horizontal input displacement into the vertical output displacement. A tangent slope gives only a first-order approximation to this finite change. Equal inputs use the local derivative.

The reverse pass carries $m$ for each activation entry, its effective secant slope to the final response score under the chosen allocation rules. Multiplying this slope by the entry's displacement gives its assigned contribution. Subscripts identify the associated activation. The pass starts with $m_F=1$, since the full score difference is to be allocated. Table~\ref{tab:reverse-comparison} contrasts tangent and secant propagation through the same activation.

\begin{table}[htbp]
\centering
\small
\caption{\textbf{The same activation, two backward calculations.} The activation has input $a$ and output $h=f(a)$. Here $d$ denotes an infinitesimal change and $\Delta$ the observed difference between the two executions.}
\label{tab:reverse-comparison}
\renewcommand{\arraystretch}{1.35}
\begin{tabular}{@{}lcc@{}}
\toprule
 & \shortstack{Ordinary backpropagation\\Tangent} & \shortstack{\dt{}\\Secant} \\
\midrule
Local slope & $f'(a_1)$ & $\dfrac{\D h}{\D a}=\dfrac{h_1-h_0}{a_1-a_0}$ \\
\addlinespace[3pt]
Output change & $dh=f'(a_1)\,da$ & $\D h=\dfrac{\D h}{\D a}\,\D a$ \\
\addlinespace[3pt]
Backward calculation & $\dfrac{\partial F}{\partial a}=f'(a_1)\dfrac{\partial F}{\partial h}$ & $m_a=\dfrac{\D h}{\D a}\,m_h$ \\
\bottomrule
\end{tabular}
\end{table}

For array operations, we use the operation's formula and both executions to express output changes as weighted sums of input changes. These local slopes form the matrix $D_f$. Linear operations use their existing weights. For a product, splitting the term involving both input changes equally between them gives equal weight to the two possible change orders. This produces the mean attention weights and mean values in Section~\ref{sec:attention}.

The same local map relates observed differences and propagates $m$ backward through its transpose,
\begin{equation}
\D h=D_f\D a,
\qquad m_a=D_f^\top m_h.
\label{eq:reverse-contribution}
\end{equation}
The transpose multiplies each output's effective slope by the corresponding local slopes and sums the products for each input. This expresses the same contribution in terms of input changes. The local slopes replace derivatives, and the propagated slopes replace gradients. The two products have the forms of a Jacobian--vector product (JVP) and a vector--Jacobian product (VJP), respectively.\footnote{PyTorch documentation for \href{https://docs.pytorch.org/docs/2.14/generated/torch.func.jvp.html}{\texttt{torch.func.jvp}} and \href{https://docs.pytorch.org/docs/2.14/generated/torch.func.vjp.html}{\texttt{torch.func.vjp}}.}

At the input embeddings, the propagated $m_{e_i}$ give the effective secant slopes for each token. Summing their products with the corresponding embedding displacements yields the token contributions,
\begin{equation}
A_i=\ip{m_{e_i}}{\D e_i},
\qquad \sum_i A_i=\D F.
\label{eq:attribution}
\end{equation}
Every backward calculation preserves the contribution being divided, so the token scores add to the measured response-score difference. Positive and negative scores identify support and opposition across the chosen response span. They use natural-log units (nats) and can be summed into name, sentence, or passage contributions. A fixed token's embedding does not change, so its contribution is zero. Appendix~\ref{app:conservation} gives the full derivation.

We next apply these rules to attention and Gated DeltaNet (Figure~\ref{fig:overview}(b, c)). Appendix~\ref{app:operators} gives the formulas for linear layers, additions, activation functions, and normalizations in the same reverse pass.

\subsection{Attention example}
\label{sec:attention}
For attention, $P_{ij}$ is the weight with which position $i$ reads source $j$, and the rows of $V$ contain source value vectors. The output $Y=PV$ is their weighted sum, with $P$ obtained by softmax normalization of query--key similarity scores.

We use the two-run means so that the weighted input changes sum exactly to the observed output change. Let $\D P$, $\D V$, and $\D Y$ denote the differences in weights, values, and output between the two executions. Splitting the interaction term $\D P\,\D V$ equally between the inputs gives equal weight to the two possible change orders:
\begin{equation}
\begin{aligned}
\D Y
&=P_0\D V+\D P\,V_0+\underbrace{\D P\,\D V}_{\text{split equally}}\\
&=\underbrace{\left(P_0+\tfrac12\D P\right)}_{\bar P}\D V
+\D P\underbrace{\left(V_0+\tfrac12\D V\right)}_{\bar V}.
\end{aligned}
\label{eq:pv}
\end{equation}
Here $\bar P$ and $\bar V$ are the means of the original and reference weights and values. They provide the two sets of local slopes in $D_f$ (Equation~\ref{eq:reverse-contribution}): the mean weights multiply the value changes, and the mean values multiply the weight changes.

The incoming slopes $m_Y$ weight the two terms in the second line of Equation~\ref{eq:pv}, dividing the output's assigned contribution between value and weight changes. The backward calculation uses the same two products as ordinary backpropagation, with means replacing the original activations,
\begin{equation}
\begin{aligned}
\text{Backpropagation}\quad &
\frac{\partial F}{\partial V}=P_1^\top\frac{\partial F}{\partial Y},
&\quad \frac{\partial F}{\partial P}=\frac{\partial F}{\partial Y}V_1^\top,\\
\text{DeltaTrace}\quad &
m_V=\bar P^\top m_Y,
&\quad m_P=m_Y\bar V^\top.
\end{aligned}
\label{eq:pv-reverse}
\end{equation}
The returned slopes continue through preceding operations to the input embeddings.

Softmax makes each attention row sum to one, so increasing one source's weight reduces the total assigned to the others. The backward calculation for softmax carries this competition through a weighted row average subtracted from the incoming slopes. Its weights depend on the probabilities in both executions. The slopes then pass through the query--key scores to their input representations. The same construction handles the vocabulary normalization used to score each response token. Appendix~\ref{app:softmax} gives the formula and its derivation.

\subsection{Gated DeltaNet example}
\label{sec:memory}
Gated DeltaNet implements linear attention through recurrent memory that stores key--value associations across token positions \citep{yang2024gated}. Figure~\ref{fig:overview}(c) shows one step from the previous memory $S_{t-1}$ to the updated memory $S_t$ and its readout $o_t$. A retention gate controls how much earlier memory is kept, and a write gate controls a correction toward the current value at its key. A query reads from $S_t$ to produce $o_t$. Normalization, an output gate, and a linear projection then produce the layer output. Appendix~\ref{app:memory} gives the state equations.

Information written by an earlier token can affect a later readout, so its contribution must pass backward through the intervening memory updates. The readout slopes $m_{o_t}$ pass backward to the memory and query. At $S_t$, the contribution from this readout joins those from later memory states. The resulting $m_{S_t}$ passes through the update to $S_{t-1}$, the value, the key, and the retention and write gates (Figure~\ref{fig:overview}(c)). The returned slopes continue through preceding operations to the input embeddings, where Equation~\ref{eq:attribution} gives token scores.

Within each GDN layer, the original and reference activations define two decompositions of the same readout changes over the complete memory sequence. Both backward calculations receive the same readout slopes. Their returned slopes for each value, key, query, and gate are averaged with equal weight before propagation continues to earlier layers. This averaging stays within the model's single reverse pass. The separate output-gating product also splits contributions between its two changing inputs using their two-run means. Appendix~\ref{app:memory} gives these calculations.

\section{Efficient Computation}
\label{sec:computation}
Storing dense attention matrices requires space for all token pairs. \dt{} computes local transpose products in small blocks and accumulates their results directly into the input slopes. The attention and memory rules in Sections~\ref{sec:attention}--\ref{sec:memory} support tiled and chunked implementations.

\paragraph{Attention in tiles.}
A tile is a small block of query and source positions. Following FlashAttention \citep{dao2022flashattention}, \dt{} reconstructs the two executions' attention probabilities for one tile, uses them to update query, key, and value slopes, and reuses the workspace for the next tile. The softmax rule needs a weighted average over the whole row. One sweep collects those row statistics. Subsequent sweeps use them when propagating through each tile. For $N$ input-plus-response positions and head dimension $d$, the stored vectors and row statistics occupy $O(Nd)$ auxiliary space per head. Dense attention requires $O(N^2d)$ arithmetic.

\paragraph{Memory in chunks.}
A chunk is a consecutive group of tokens. The memory state at a chunk boundary summarizes what earlier tokens passed forward. In reverse, the slopes for that state pass back to the preceding chunk. Within each chunk, matrix products propagate the slopes through its reads, writes, and gates. This organization uses Flash Linear Attention (FLA) \citep{yang2024fla}. At fixed state dimensions and chunk size, the stored token slopes and boundary states grow linearly with sequence length.

The complete attribution call includes two forward executions and one reverse pass. It recomputes intermediate activations one layer at a time during the reverse pass, so each layer's workspace can be reused for the next. Appendix~\ref{app:computation} details the storage accounting and accelerated operations. Section~\ref{sec:evaluation} measures the complete call's time and memory.

\section{Evaluation}
\label{sec:evaluation}
We evaluate evidence recovery, deletion faithfulness, and attribution cost.

\paragraph{Tasks and methods.}
We first evaluate Qwen3-8B on the 1,243 examples released with FlashTrace \citep{pan2026flashtrace}. These comprise ten RULER tasks with 100 examples each, HotpotQA with 48, MATH with 100, and MoreHopQA with 95.\footnote{\url{https://github.com/wbopan/flashtrace/releases/tag/table1-data-v1}} Needle-in-a-haystack (NIAH) tasks retrieve information among distractors. Multi-query tasks (MQ-Q2/Q4/Q8) ask for 2, 4, or 8 keys, while multi-value tasks (MV-V2/V4/V8) ask for 2, 4, or 8 values of one key. Variable Tracking (VT) follows variable assignments. Here H is the number of binding hops and C the number of parallel chains. HotpotQA and MoreHopQA require combining facts, while MATH contains mathematical reasoning problems. The full stored reasoning and answer plus EOS is scored for faithfulness. We compare \dt{} (DT) with FlashTrace (FT), Perturbation, REAGENT, Context Length Probing (CLP), Information Flow Routes (IFR), and Attention-aware Layer-wise Relevance Propagation (AttnLRP). Table~\ref{tab:baseline-methods} in Appendix~\ref{app:evaluation} summarizes their principles and DT's advantages. DT and FT use the same evaluation examples. VT/HotpotQA recovery compares all seven methods.

\paragraph{Evidence recovery.}
Recall is the fraction of annotated evidence recovered among the highest-ranked sources. NIAH evidence consists of the inserted key--value sentences, and VT evidence consists of the assignment statements along the target variable chain \citep{pan2026flashtrace}. A token budget limits how much input the ranking may select. For NIAH, DT attributes the full response plus EOS. We select the top 10\% of eligible input tokens ranked by positive-part attribution scores. Eligible tokens are the source positions marked for evaluation in the released examples. VT measures token Recall in the passage body, with budgets of 10\%, 10\%, 20\%, and 30\% for H2, H4, H6, and H10, and attributes the answer tokens. HotpotQA reports Full-support Recall@30\%. An example scores one if every official supporting fact is selected, and zero otherwise. It attributes the full response, ranks passage sentences by the sum of their positive token contributions, and selects the longest ranking prefix within 30\% of all body tokens. FT uses three recursive tracing steps (K3) for recovery. Table~\ref{tab:recovery} reports these values.

\paragraph{Deletion faithfulness.}
The released deletion test masks high-ranked source tokens with EOS in 20 steps, rescoring the same full response after each step. The score is normalized between the full-input and fully-masked endpoints. A good ranking causes an early score drop because it masks evidence the model uses. The RISE deletion score summarizes the area under this curve. Magnitude Aligned Scoring (MAS) also compares the remaining response score with the remaining share of positive attribution \citep{pan2026flashtrace}. Lower values are better for both metrics. DT uses signed scores for RISE. For MAS, it sets negative scores to zero, ranking sources by their positive support for the response. Appendix~\ref{app:evaluation} specifies the ranking and normalization.

\begin{table}[t]
\centering\small
\caption{RISE $\downarrow$ on the complete released Qwen3-8B tasks. FT is FlashTrace \citep{pan2026flashtrace} and DT is \dt{}. DT uses signed ranking. FT uses the same evaluation examples as DT. $n$ is the released task size. Bold marks the lowest observed mean across all seven methods.}
\label{tab:full}
\setlength{\tabcolsep}{3.0pt}
\begin{tabular}{@{}lr rrrrrrr@{}}
\toprule
Task & $n$ & Perturbation & REAGENT & CLP & IFR & AttnLRP & FT & DT \\
\midrule
MQ-Q2 & 100 & 0.0945 & 0.1174 & 0.0977 & 0.0747 & 0.1955 & \textbf{0.0682} & 0.0723 \\
MQ-Q4 & 100 & 0.2389 & 0.2599 & 0.2534 & 0.1147 & 0.2628 & \textbf{0.1134} & 0.1448 \\
MQ-Q8 & 100 & 0.4986 & 0.4865 & 0.5096 & 0.3709 & 0.3766 & \textbf{0.3519} & 0.4286 \\
MV-V2 & 100 & 0.1343 & 0.1876 & 0.1559 & 0.0691 & 0.1403 & 0.0686 & \textbf{0.0597} \\
MV-V4 & 100 & 0.1862 & 0.2107 & 0.2166 & 0.0726 & 0.1934 & 0.0704 & \textbf{0.0587} \\
MV-V8 & 100 & 0.3514 & 0.3685 & 0.3929 & 0.2051 & 0.2852 & 0.1828 & \textbf{0.1159} \\
\midrule
VT-H2-C3 & 100 & 0.3844 & 0.4378 & 0.3742 & 0.1609 & 0.3191 & 0.1317 & \textbf{0.1003} \\
VT-H4-C1 & 100 & 0.3538 & 0.3971 & 0.3277 & 0.1018 & 0.3239 & 0.1102 & \textbf{0.1013} \\
VT-H6-C1 & 100 & 0.4584 & 0.4860 & 0.4225 & 0.1245 & 0.3378 & 0.1224 & \textbf{0.1109} \\
VT-H10-C1 & 100 & 0.4663 & 0.4953 & 0.4510 & 0.1526 & 0.3573 & 0.1427 & \textbf{0.1403} \\
\midrule
HotpotQA & 48 & 0.1333 & 0.1445 & 0.1013 & 0.0745 & 0.1548 & 0.0331 & \textbf{0.0275} \\
MATH & 100 & 0.3802 & 0.3878 & 0.3616 & 0.3537 & 0.3684 & 0.3487 & \textbf{0.2321} \\
MoreHopQA & 95 & 0.2489 & 0.2652 & 0.2487 & 0.1455 & 0.2857 & 0.1281 & \textbf{0.0703} \\
\bottomrule
\end{tabular}
\end{table}

\begin{table}[t]
\centering\small
\caption{MAS $\downarrow$ on the same released tasks and seven methods as Table~\ref{tab:full}. DT uses its positive source contributions. Bold marks the lowest observed mean.}
\label{tab:mas}
\setlength{\tabcolsep}{3.0pt}
\begin{tabular}{@{}lr rrrrrrr@{}}
\toprule
Task & $n$ & Perturbation & REAGENT & CLP & IFR & AttnLRP & FT & DT \\
\midrule
MQ-Q2 & 100 & 0.1442 & 0.1966 & 0.1663 & 0.1397 & 0.3261 & 0.1628 & \textbf{0.1176} \\
MQ-Q4 & 100 & 0.3270 & 0.3574 & 0.3203 & \textbf{0.1770} & 0.4509 & 0.1937 & 0.2011 \\
MQ-Q8 & 100 & 0.7093 & 0.6942 & 0.6566 & 0.4600 & 0.6020 & \textbf{0.4271} & 0.6047 \\
MV-V2 & 100 & 0.1870 & 0.2756 & 0.2159 & 0.1343 & 0.2290 & 0.1569 & \textbf{0.1046} \\
MV-V4 & 100 & 0.2436 & 0.2905 & 0.2804 & 0.1416 & 0.3248 & 0.1605 & \textbf{0.1058} \\
MV-V8 & 100 & 0.4580 & 0.4938 & 0.5114 & 0.2746 & 0.4752 & 0.2477 & \textbf{0.1717} \\
\midrule
VT-H2-C3 & 100 & 0.5513 & 0.6680 & 0.4900 & 0.2309 & 0.5214 & 0.1873 & \textbf{0.1486} \\
VT-H4-C1 & 100 & 0.5168 & 0.6031 & 0.4199 & 0.1479 & 0.5475 & 0.1656 & \textbf{0.1406} \\
VT-H6-C1 & 100 & 0.6837 & 0.7453 & 0.5648 & 0.1727 & 0.5723 & 0.1787 & \textbf{0.1537} \\
VT-H10-C1 & 100 & 0.7013 & 0.7405 & 0.5967 & 0.2011 & 0.5915 & 0.1983 & \textbf{0.1842} \\
\midrule
HotpotQA & 48 & 0.2203 & 0.2351 & 0.1895 & 0.1655 & 0.2493 & 0.1276 & \textbf{0.1002} \\
MATH & 100 & 0.5202 & 0.5358 & 0.4907 & 0.4897 & 0.5244 & 0.4461 & \textbf{0.3323} \\
MoreHopQA & 95 & 0.3470 & 0.3679 & 0.3478 & 0.2277 & 0.4579 & 0.2049 & \textbf{0.1546} \\
\bottomrule
\end{tabular}
\end{table}

\begin{table}[t]
\centering\footnotesize
\caption{Evidence recovery ($\%$, higher is better) at the listed token budgets. NIAH/VT report token Recall. HotpotQA reports Full-support Recall@30\%, the fraction of examples for which every official supporting fact is selected. Sentence scores sum positive token contributions. All body tokens are included in the budget calculation. VT macro averages its four rows. FT uses K3 for recovery. Bold marks the highest observed mean.}
\label{tab:recovery}
\setlength{\tabcolsep}{2.2pt}
\begin{tabular}{@{}lrr rrrrrrr@{}}
\toprule
Task & $n$ & Budget & Perturbation & REAGENT & CLP & IFR & AttnLRP & FT & DT \\
\midrule
MQ-Q2 & 100 & 10\% & 39.05 & 24.35 & 39.85 & 47.10 & 21.49 & 46.32 & \textbf{67.24} \\
MQ-Q4 & 100 & 10\% & 9.04 & 8.49 & 8.58 & 32.84 & 20.42 & 39.58 & \textbf{47.16} \\
MQ-Q8 & 100 & 10\% & 0.97 & 0.47 & 0.84 & 1.19 & 7.63 & 7.72 & \textbf{13.59} \\
MV-V2 & 100 & 10\% & 25.48 & 18.02 & 20.67 & 57.50 & 25.44 & 53.46 & \textbf{77.79} \\
MV-V4 & 100 & 10\% & 16.05 & 15.60 & 14.61 & 45.24 & 24.25 & 49.37 & \textbf{56.79} \\
MV-V8 & 100 & 10\% & 8.02 & 7.37 & 7.29 & 17.87 & 15.88 & 19.52 & \textbf{25.19} \\
\midrule
VT-H2-C3 & 100 & 10\% & 26.69 & 26.66 & 29.10 & 81.16 & 83.37 & 72.41 & \textbf{93.81} \\
VT-H4-C1 & 100 & 10\% & 18.86 & 17.58 & 20.53 & 70.09 & 70.54 & 68.32 & \textbf{71.31} \\
VT-H6-C1 & 100 & 20\% & 33.57 & 32.47 & 33.85 & 83.74 & 84.57 & 82.15 & \textbf{95.37} \\
VT-H10-C1 & 100 & 30\% & 40.70 & 42.48 & 44.04 & 82.66 & 77.33 & 83.41 & \textbf{90.51} \\
VT macro & 400 & mixed & 29.96 & 29.80 & 31.88 & 79.41 & 78.95 & 76.57 & \textbf{87.75} \\
\midrule
HotpotQA & 48 & 30\% & 35.42 & 35.42 & 64.58 & 85.42 & \textbf{87.50} & 75.00 & 85.42 \\
\bottomrule
\end{tabular}
\end{table}

\paragraph{Attribution quality.}
On Qwen3-8B, DT has the lowest RISE on 10 of 13 tasks and the lowest MAS on 11 (Tables~\ref{tab:full} and~\ref{tab:mas}). The largest absolute RISE reduction against FT is on MATH, from 0.3487 to 0.2321. DT achieves the highest Recall on all six NIAH tasks and all four VT tasks. At the stated VT budgets, its macro Recall is 87.75\%, compared with 76.57\% for FT K3. On HotpotQA, DT reaches 85.42\% Full-support Recall, compared with 75.00\% for FT K3 (Table~\ref{tab:recovery}).

\begin{table}[t]
\centering\small
\caption{Attribution quality on Qwen3.5-9B. All tasks use the full set of released examples. Recovery is token Recall for NIAH/VT and Full-support Recall@30\% for HotpotQA, as defined in Table~\ref{tab:recovery}. FT denotes FlashTrace, with K3 for Recall and K1 for RISE/MAS. DT uses signed RISE and positive-part MAS. Bold marks the better mean. Dashes indicate undefined recovery.}
\label{tab:qwen35-quality}
\setlength{\tabcolsep}{4.0pt}
\begin{tabular}{@{}lrr rr rr rr@{}}
\toprule
Task & $n$ & Budget & \multicolumn{2}{c}{Recall (\%) $\uparrow$} & \multicolumn{2}{c}{RISE $\downarrow$} & \multicolumn{2}{c}{MAS $\downarrow$} \\
\cmidrule(lr){4-5}\cmidrule(lr){6-7}\cmidrule(l){8-9}
 & & & FT & DT & FT & DT & FT & DT \\
\midrule
MQ-Q2 & 100 & 10\% & 52.92 & \textbf{71.94} & \textbf{0.0982} & 0.0996 & 0.2549 & \textbf{0.1306} \\
MQ-Q4 & 100 & 10\% & 38.56 & \textbf{54.18} & \textbf{0.1414} & 0.1579 & 0.2651 & \textbf{0.2242} \\
MQ-Q8 & 100 & 10\% & 10.72 & \textbf{14.63} & \textbf{0.3704} & 0.4028 & \textbf{0.4390} & 0.5891 \\
MV-V2 & 100 & 10\% & 60.51 & \textbf{83.53} & 0.0974 & \textbf{0.0804} & 0.2467 & \textbf{0.1080} \\
MV-V4 & 100 & 10\% & 46.19 & \textbf{61.42} & 0.1284 & \textbf{0.1130} & 0.2538 & \textbf{0.1510} \\
MV-V8 & 100 & 10\% & 22.27 & \textbf{26.36} & 0.2508 & \textbf{0.2206} & 0.3264 & \textbf{0.2992} \\
\midrule
MATH & 100 & --- & --- & --- & 0.3777 & \textbf{0.2638} & 0.4837 & \textbf{0.3368} \\
MoreHopQA & 95 & --- & --- & --- & 0.2093 & \textbf{0.1429} & 0.2894 & \textbf{0.2001} \\
\midrule
VT H2-C3 & 100 & 10\% & 67.80 & \textbf{94.20} & 0.1514 & \textbf{0.1095} & 0.2224 & \textbf{0.1295} \\
VT H4-C1 & 100 & 10\% & 69.07 & \textbf{71.89} & 0.1161 & \textbf{0.1013} & 0.1888 & \textbf{0.1343} \\
VT H6-C1 & 100 & 20\% & 83.06 & \textbf{97.35} & 0.1300 & \textbf{0.1154} & 0.2051 & \textbf{0.1504} \\
VT H10-C1 & 100 & 30\% & 81.90 & \textbf{94.23} & \textbf{0.1537} & 0.1561 & 0.2307 & \textbf{0.2040} \\
HotpotQA & 48 & 30\% & 58.33 & \textbf{95.83} & 0.2154 & \textbf{0.1470} & 0.3125 & \textbf{0.2367} \\
\bottomrule
\end{tabular}
\end{table}

\paragraph{Qwen3.5-9B results.}
We also evaluate DT and FT on the hybrid Qwen3.5-9B model using 13 tasks comprising 1,243 examples, including all 400 VT and 48 HotpotQA examples (Table~\ref{tab:qwen35-quality}). DT has lower RISE on 9 tasks and lower MAS on 12. Recall is higher on all six NIAH and all four VT tasks. On HotpotQA, DT reaches 95.83\% Full-support Recall, compared with 58.33\% for FT K3, and lowers both RISE and MAS.

Across both models, DT recovers more annotated MQ evidence within the same 10\% token budget, with higher Recall for 2, 4, and 8 queried keys.

\paragraph{Attribution time and memory.}
The first-page timing panel compares DT and one-hop FT as response length increases on Qwen3-8B, using one MetaX C550. Crosses mark out-of-memory runs. In the 16-example Qwen3-8B comparison, DT has comparable attribution time to one-hop FT (11.40 versus 12.16 seconds, summing per-example medians) with lower peak allocated memory (18.76 versus 21.49 GB). Times cover complete attribution calls. Appendix~\ref{app:cost} gives the timing protocol.

\paragraph{Reading the token scores.}
The signed scores distinguish support from opposition in the Qwen3.5-9B playwright example (Figure~\ref{fig:overview}(d)). Figures~\ref{fig:retrieval} and~\ref{fig:multihop} show retrieval and multi-hop examples with each model's own scores. Teal marks positive contributions and coral negative contributions to the full response score. The retrieval example links source keys to their numbers and the query naming them. The multi-hop example follows a voice actor to a film and its director, whose surname determines the answer. Each deletion curve compares DT with one-hop FT on the same model and response.

\section{Related Work}

\paragraph{Reference-based attribution.}
Integrated Gradients integrates input sensitivities between a reference and the input, while DeepLIFT propagates activation differences into source contributions \citep{sundararajan2017axiomatic,shrikumar2017deeplift}. Conservative Propagation and AttnLRP develop relevance rules for transformers \citep{ali2022conservative,achtibat2024attnlrp}. DT constructs local slopes from each operation's formula and two executions, then composes them in a single reverse pass to attribute the response's log-probability difference to input tokens.

\paragraph{Tracing source information.}
ALTI aggregates token interactions across layers, and DecompX propagates decomposed representations to the prediction head \citep{ferrando2022alti,modarressi2023decompx}. Information Flow Routes traces influential computations, while FlashTrace aggregates response spans and recursively follows their source importance \citep{ferrando2024information,pan2026flashtrace}. DT sums the target response tokens' log-probabilities into one scalar score and attributes its difference from the reference to input tokens.

\paragraph{Recurrent models.}
MambaLRP conserves relevance through selective state-space computation, and LaTIM decomposes token interactions in Mamba models \citep{jafari2024mambalrp,pitorro2025latim}. The Gated DeltaNet example applies DT's backward calculation to retained states, write corrections, and query reads \citep{yang2024gated}. The same reverse pass traverses attention and memory layers.

\section{Conclusion}
DT assigns signed token contributions to the change in a full response's log-probability between reference and original inputs. Its reverse pass replaces local derivatives with finite-change local slopes and composes them into effective secant slopes. Their products with embedding displacements give token contributions that sum to the response-score difference. Tiled attention and chunked memory make the auxiliary storage linear in sequence length at fixed model dimensions and chunk size. On the evaluated attention and hybrid models, DT improves evidence recovery and deletion faithfulness over FlashTrace on most tasks. The same counterfactual view has an action-level analogue: under a fixed continuation policy, averaging the return difference over a reference action recovers the advantage $Q^\pi(h,a)-V^\pi(h)$, linking finite counterfactual credit to the standard policy-gradient signal (Appendix~\ref{app:credit}).

\section*{AI Use Statement}
AI tools assisted with method development, derivations, coding, experiments, literature review, and manuscript preparation.

\section*{Ethics Statement}
The experiments use publicly released benchmark examples and do not involve human subjects or private personal data. The work presents a research tool for analyzing model behavior, and we have no additional conflicts of interest, sponsorship, or legal or ethical concerns to disclose.

\section*{Reproducibility Statement}
The paper and appendix specify the attribution procedure, finite propagation rules, attention and Gated DeltaNet cases, evaluation metrics, and measurement protocols. The supplementary code package provides the implementation, frozen result tables, figure-generation scripts, environment records, and reproduction instructions for the reported results.

\begingroup
\raggedright
\bibliography{references}
\endgroup
\bibliographystyle{iclr2027_conference}
\clearpage
\appendix
\section{Formal Rules and Derivations}
\label{app:proofs}

\subsection{Finite propagation and contribution conservation}
\label{app:conservation}
For an operation $v=f(u_1,\ldots,u_k)$, \dt{} constructs linear maps $D_j$ from the two executions such that
\begin{equation}
\D v=\sum_{j=1}^k D_j\D u_j.
\label{eq:local}
\end{equation}
The incoming slopes $m_v$ propagate to input $j$ as $D_j^\top m_v$. Slopes from multiple consumers add at their shared input. Starting with slope $1$ at the scalar target, the reverse pass produces the source slopes in Equation~\ref{eq:attribution}. For a scalar nonlinearity $f$, the local slope is the divided difference $(f(u_1)-f(u_0))/(u_1-u_0)$, with its derivative as the coincident-input limit. Linear transformations retain their weights, and addition sends the slope to each summand.

\begin{theorem}[Contribution conservation]
For an acyclic computation over real-valued operations satisfying Equation~\ref{eq:local}, reverse propagation from a scalar target yields the conservation identity in Equation~\ref{eq:attribution}.
\end{theorem}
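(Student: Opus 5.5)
The plan is to prove a stronger invariant by induction over a reverse topological order of the computation graph, and read off Equation~\ref{eq:attribution} as its terminal case. Index the nodes (activation arrays) $u$ of the acyclic graph so that every operation's inputs precede its output, with the input embeddings $e_i$ as sources and the scalar target $F$ as the unique sink. Write $\mathcal{C}(u)$ for the set of operations consuming $u$; by the propagation rule, $m_u=\sum_{f\in\mathcal{C}(u)}D_{f,u}^\top m_{v_f}$, where $v_f$ is the output of $f$.

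For a \emph{cut}, meaning a set $\mathcal{S}$ of nodes separating the sources from $F$ and closed in the appropriate sense, we will show
\[
\D F=\sum_{u\in\mathcal{S}}\ip{m_u}{\D u},
\]
where $m_u$ counts only the paths from $u$ to $F$ that avoid the other cut nodes. The cleanest version processes nodes one at a time. Maintain a frontier $\mathcal{S}_k$, starting from $\mathcal{S}_0=\{F\}$ with $m_F=1$, so the identity holds trivially as $\D F=1\cdot\D F$.

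At step $k$, pick the frontier node $v$ that is latest in topological order. It is produced by an operation $f$ with inputs $u_1,\dots,u_k$. By choice of $v$, every consumer of $v$ has already been eliminated, so its slope $m_v$ is complete. Apply Equation~\ref{eq:local}:
\[
\ip{m_v}{\D v}=\sum_j\ip{m_v}{D_j\D u_j}=\sum_j\ip{D_j^\top m_v}{\D u_j}.
\]
Replace $v$ in the frontier by the $u_j$, accumulating $D_j^\top m_v$ into each $m_{u_j}$. If some $u_j$ is already on the frontier, which happens when it is shared by several consumers, the two terms $\ip{m^{\text{old}}_{u_j}}{\D u_j}+\ip{D_j^\top m_v}{\D u_j}$ merge by bilinearity. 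This is exactly the rule that slopes from multiple consumers add, so the sum is preserved at every step.

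Acyclicity guarantees termination, with the frontier ending at source nodes only. Those are the embeddings, together with parameters and fixed tokens whose displacement is zero. The terms with zero displacement vanish, and what remains is $\sum_i\ip{m_{e_i}}{\D e_i}=\sum_iA_i$.

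The main obstacle is bookkeeping rather than algebra: we must ensure that when $v$ is eliminated its slope is final. Otherwise a later consumer's contribution would be multiplied by a stale $D^\top$. Choosing the frontier node latest in topological order handles this, since all of its consumers lie later in the order and were removed earlier. We should also verify that the slopes produced by this elimination schedule coincide with those of the standard reverse pass. This holds because both compute $m_u=\sum_{f}D_{f,u}^\top m_{v_f}$ with complete $m_{v_f}$.

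Two remarks close the argument. First, the scalar divided-difference case and its derivative limit at coincident inputs both satisfy Equation~\ref{eq:local}, so nonlinearities need no separate argument. Second, for the averaged GDN rule, an equal-weight average of two valid decompositions of the same $\D v$ is again a valid decomposition of the form of Equation~\ref{eq:local}, so it is covered by the same induction.
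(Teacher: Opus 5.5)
Your proposal is correct and takes essentially the paper's route: use Equation~\ref{eq:local} to replace $\langle m_v,\Delta v\rangle$ by $\sum_j\langle D_j^\top m_v,\Delta u_j\rangle$, which preserves the total on each cut, sum slopes at shared parents, and stop at the input cut, where $m_F=1$ gives $\Delta F$ and zero-displacement inputs drop out. Your reverse-topological elimination schedule makes explicit the bookkeeping that the paper leaves implicit (each node's slope is complete before it is expanded), which the paper handles by expanding the graph into elementary operations with explicit branches.
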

\begin{proof}
An acyclic computation can be expanded into elementary operations with explicit branches. At an operation $v=f(u_1,\ldots,u_k)$, Equation~\ref{eq:local} gives
\begin{equation}
\ip{m_v}{\D v}=\sum_j\ip{D_j^\top m_v}{\D u_j}.
\end{equation}
Replacing a node's contribution by the contributions of its parents therefore preserves the total on a cut through the graph. At shared parents, slopes from different consumers sum. Repeated substitution reaches the input cut and yields Equation~\ref{eq:attribution}, since the slope of the scalar target is $1$. Constant inputs have zero difference and contribute zero to this sum.

\end{proof}

\subsection{Finite attention normalization}
\label{app:softmax}
For an attention row $p_b=\sm(z_b)$, $b\in\{0,1\}$, the finite rule first converts changes in log-probabilities into changes in probabilities. The slope of this conversion is the logarithmic mean $\ell_i$. We define it and its sum as
\begin{equation}
\ell_i=\frac{p_{1,i}-p_{0,i}}{\log p_{1,i}-\log p_{0,i}},\qquad
\tau=\sum_i\ell_i,
\label{eq:logmean}
\end{equation}
with $\ell_i=p_{0,i}$ at coincident probabilities. Dividing by $\tau$ gives weights that sum to one for the row averages below. All quantities are evaluated on the common unmasked support.
\begin{proposition}[Finite attention normalization]
For finite logits on a common support,
\begin{equation}
\D p=\left(\diag(\ell)-\frac{\ell\ell^\top}{\tau}\right)\D z.
\label{eq:softmax}
\end{equation}
\end{proposition}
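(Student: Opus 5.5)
We need $\D p=\diag(\ell)\D z-\ell(\ell^\top\D z)/\tau$. We work with log-probabilities and split the softmax into the log-softmax followed by the exponential.

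\textbf{Step 1: the exponential step.} Write $\log p_{b,i}=z_{b,i}-c_b$, where $c_b=\log\sum_j e^{z_{b,j}}$ is the log-partition function. Taking differences gives
\[
\D\log p_i=\D z_i-\D c .
\]
By the definition of the logarithmic mean in Equation~\ref{eq:logmean}, $\D p_i=\ell_i\,\D\log p_i$. At coincident probabilities both sides vanish, so the convention $\ell_i=p_{0,i}$ is harmless there. Hence
\[
\D p_i=\ell_i(\D z_i-\D c).
\]
This is an exact identity, not an approximation, because the divided difference of $\exp$ on the log scale is precisely $\ell_i$.

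\textbf{Step 2: determine $\D c$ from normalization.} This is the only step that needs care. Both rows sum to one on the common support, so $\sum_i\D p_i=0$. Substituting Step~1 gives
\[
\sum_i\ell_i\D z_i-\tau\,\D c=0,
\qquad\text{so}\qquad
\D c=\frac{\ell^\top\D z}{\tau}.
\]
We must check $\tau>0$. Each $\ell_i$ is a logarithmic mean of two positive numbers, so $\ell_i>0$ for finite logits. The common-support assumption ensures that masked entries, where $p$ would be $0$ and $\log p=-\infty$, are excluded.

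\textbf{Step 3: assemble the formula.} Substituting $\D c$ into Step~1 gives
\[
\D p_i=\ell_i\D z_i-\ell_i\,\frac{\ell^\top\D z}{\tau},
\]
which is exactly Equation~\ref{eq:softmax}. As a consistency check, when $\D z\to0$ we have $\ell\to p$ and $\tau\to1$, so the matrix reduces to the softmax Jacobian $\diag(p)-pp^\top$.

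We expect the main obstacle to be justifying that $\D c$ is pinned down by normalization rather than computed directly as a divided difference of log-sum-exp. The argument above sidesteps this, since the constraint $\sum_i\D p_i=0$ determines $\D c$ uniquely given $\tau>0$. Consequently the map in Equation~\ref{eq:softmax} satisfies Equation~\ref{eq:local}, and the contribution-conservation theorem applies to it.
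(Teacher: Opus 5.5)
Your proof is correct and matches the paper's argument step for step: write $\log p_b = z_b - c_b\mathbf{1}$, use the logarithmic mean to get $\D p = \ell\odot(\D z - \D c\,\mathbf{1})$, determine $\D c = \ell^\top\D z/\tau$ from $\mathbf{1}^\top\D p = 0$, and substitute. Your explicit check that $\tau>0$ on the common support, and your tangent-limit consistency check, are small additions that the paper leaves implicit.
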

\begin{proof}
The common support contains positive probabilities. For endpoint $b$, $\log p_b=z_b-a_b\mathbf 1$, where $a_b=\log\sum_i\exp(z_{b,i})$. The definition of the logarithmic mean gives
\begin{equation}
\D p=\ell\odot(\D z-\D a\,\mathbf 1).
\end{equation}
Probability normalization implies $\mathbf 1^\top\D p=0$, so
\begin{equation}
\D a=\frac{\ell^\top\D z}{\tau}.
\end{equation}
Substitution proves Equation~\ref{eq:softmax}. For a target coordinate $y$,
\begin{equation}
\D\log p_y=\left(e_y-\frac{\ell}{\tau}\right)^\top\D z.
\end{equation}
The coincident-probability definition follows from the continuous limit of the logarithmic mean.

\end{proof}
The operator is symmetric. For incoming slopes $m_p$ at probabilities $p$, the slope at query--key score $z_i$ is
\begin{equation}
m_{z,i}=\ell_i(m_{p,i}-\bar m_p),\qquad
\bar m_p=\frac{\sum_j\ell_j m_{p,j}}{\sum_j\ell_j}.
\label{eq:softmax-vjp}
\end{equation}
The slopes for a log-softmax target are $m_z=e_y-\ell/\tau$, where $e_y$ is the one-hot vector selecting the target vocabulary coordinate.

\subsection{Finite operator rules}
\label{app:operators}
For the attention example with output $Y=PV$, Equation~\ref{eq:pv} gives the slopes $m_V=\bar P^\top m_Y$ and $m_P=m_Y\bar V^\top$ for an incoming slope matrix $m_Y$.

Query--key scores use the symmetric product identity,
\begin{equation}
\D(QK^\top)=\D Q\,\bar K^\top+\bar Q\,\D K^\top,\qquad
\bar Q=(Q_0+Q_1)/2,
\label{eq:qk}
\end{equation}
with $\bar K$ defined in the same way. Table~\ref{tab:rules} lists the finite decompositions by operator. Normalization, rotary position transformations, and grouped heads follow their respective model definitions.

\begin{table}[htbp]
\centering
\small
\caption{Finite propagation rules by operator. Bars denote endpoint averages. The gated-memory recurrence is given in Appendix~\ref{app:memory}.}
\label{tab:rules}
\begin{tabular}{@{}p{0.27\linewidth}p{0.65\linewidth}@{}}
\toprule
Operation & Change decomposition \\
\midrule
Linear and residual & $\D(Wx+b)=W\D x$; $\D(a+b)=\D a+\D b$ \\
Attention values & $\D(PV)=\bar P\D V+\D P\bar V$ \\
Attention scores & $\D(QK^\top)=\D Q\bar K^\top+\bar Q\D K^\top$ \\
SwiGLU & $\D(us)=\bar s\odot\D u+\bar u\odot\D s$, $s=\silu(g)$ \\
GDN output gating & $\D(n\odot s)=\bar s\odot\D n+\bar n\odot\D s$ \\
RMS normalization & Analytic finite rule for $wx/\sqrt{\operatorname{mean}(x^2)+\epsilon}$ \\
\bottomrule
\end{tabular}
\end{table}

\subsection{Backward calculation for the Gated DeltaNet example}
\label{app:memory}
With column vectors and a key-by-value state $S_t\in\R^{d_k\times d_v}$, its per-token computation is
\begin{align}
T_t&=\alpha_t S_{t-1}, & r_t&=T_t^\top k_t, \nonumber\\
c_t&=v_t-r_t, & u_t&=\beta_t c_t, \label{eq:gdn}\\
S_t&=T_t+k_tu_t^\top, & o_t&=S_t^\top q_t. \nonumber
\end{align}
The query includes the model's fixed attention scale. $T_t$ is the retained state, $r_t$ is its readout at key $k_t$, and $u_t$ is the gated difference between the incoming value and that readout. One endpoint ordering constructs a finite map $D_{\mathrm{GDN}}^{01}$ for the complete memory recurrence through
\begin{align}
\D T_t&=\alpha_{t,1}\D S_{t-1}+S_{t-1,0}\D\alpha_t, \nonumber\\
\D r_t&=(\D T_t)^\top k_{t,1}+T_{t,0}^\top\D k_t, \nonumber\\
\D u_t&=\beta_{t,1}(\D v_t-\D r_t)+c_{t,0}\D\beta_t, \label{eq:gdn-finite}\\
\D S_t&=\D T_t+k_{t,1}\D u_t^\top+\D k_t\,u_{t,0}^\top, \nonumber\\
\D o_t&=(\D S_t)^\top q_{t,1}+S_{t,0}^\top\D q_t. \nonumber
\end{align}
Swapping the reference and original activations used as local slopes gives $D_{\mathrm{GDN}}^{10}$. Within each GDN layer, we average these memory-recurrence maps and propagate the readout slopes as
\begin{equation}
\bar D_{\mathrm{GDN}}=\frac{D_{\mathrm{GDN}}^{01}+D_{\mathrm{GDN}}^{10}}{2},
\qquad m_{\mathrm{in}}=\bar D_{\mathrm{GDN}}^\top m_o.
\label{eq:gdn-symmetric}
\end{equation}
Here $m_o$ collects the slopes of all readouts $o_t$ in the layer, and $m_{\mathrm{in}}$ collects those returned to the recurrence inputs. Equivalently, apply each transpose map to the same $m_o$ and average the returned slopes. Both maps express the same finite output difference, so their mean preserves it. The recurrence returns slopes for values, keys, queries, and both gates, and propagates state slopes to earlier token positions. Scalar gate nonlinearities and query/key normalization use the surrounding network's finite rules. Output gating uses the symmetric product in Table~\ref{tab:rules}, with $n_t$ the normalized readout and $s_t$ its activated output-gate vector.

\subsection{Product and normalization identities}
For a bilinear map $B$, both
\begin{align}
\D B(a,b)&=B(\D a,\bar b)+B(\bar a,\D b),\\
\D B(a,b)&=B(\D a,b_1)+B(a_0,\D b)
\end{align}
follow by expansion. Equation~\ref{eq:pv} and the SwiGLU rule use the first, symmetric identity. The second identity, with the displayed argument ordering, gives the products in Equation~\ref{eq:gdn-finite}. Its swapped ordering gives the other layer map averaged in Equation~\ref{eq:gdn-symmetric}. Subtraction and state addition are linear.

For RMS normalization $f(x)=w\odot x/r(x)$, $r(x)=\sqrt{\|x\|^2/d+\epsilon}$, the symmetric product rule gives an analytic local map. With $\bar x=(x_0+x_1)/2$, $r_b=r(x_b)$, and $v=w\odot m$, its transpose action is
\begin{equation}
D_f^\top m=
\frac{r_0^{-1}+r_1^{-1}}{2}\,v
-\frac{2\bar x\,\ip{\bar x}{v}}{d\,r_0r_1(r_0+r_1)}.
\end{equation}
The identity follows from $\D r=(2\ip{\bar x}{\D x}/d)/(r_0+r_1)$ and $\D(r^{-1})=-\D r/(r_0r_1)$. The same construction covers fixed learned scale parameters and the query/key normalization used in the memory path.

\section{Implementation and Storage Accounting}
\label{app:computation}

\paragraph{Attention.}
Native endpoint queries, keys, values, and log-normalizers determine the probabilities within each FlashAttention tile. The first sweep accumulates the logarithmic-mean normalizers and weighted row averages in Equation~\ref{eq:softmax-vjp}. Subsequent sweeps accumulate query, key, and value slopes. Endpoint averages reside in linear-size buffers or are computed inside the tiles.

\paragraph{Memory.}
The implementation processes 64-token chunks using FLA's native chunk-state adjoints and batched contractions with saved endpoint states. An associative affine scan computes the finite decay contribution. Convolutions and linear projections propagate slopes through their transpose operations.

\paragraph{Storage.}
For sequence length $T$, head dimension $d$, and tile or chunk width $C$, attention vectors and row statistics occupy $O(Td)$ storage per head. Pairwise products reside in bounded tiles. Dense attention arithmetic is $O(T^2d)$. Chunked memory occupies $O(Td+TC+(T/C)d^2)$ auxiliary storage per head for token slopes, within-chunk products, and boundary states. Both storage expressions are linear in $T$ at fixed $d$ and $C$. Layer replay supplies intermediate activations, with checkpoint storage accounted for separately.

\clearpage
\section{Evaluation Specification}
\label{app:evaluation}
Both models use the released prompts, stored responses, eligible source positions, and evidence annotations. Table~\ref{tab:qwen35-quality} specifies the Qwen3.5 examples for each task. An eligible position is an input position included in the task's attribution and deletion domain. For NIAH, this domain follows the released token selection. VT and HotpotQA recovery select from the passage body.

\paragraph{Baseline comparisons.}
VT and HotpotQA recovery compare all seven methods on the same 448 inputs. The paired development comparison uses the same inputs and stored responses for DT and FT on each model, with one FT recursive step for faithfulness and three for recovery.

\begin{table}[!htb]
\centering\small
\caption{Attribution methods.}
\label{tab:baseline-methods}
\setlength{\tabcolsep}{4pt}
\renewcommand{\arraystretch}{1.15}
\begin{tabular}{@{}>{\raggedright\arraybackslash}p{.17\linewidth}>{\raggedright\arraybackslash}p{.27\linewidth}>{\raggedright\arraybackslash}p{.20\linewidth}>{\raggedright\arraybackslash}p{\dimexpr.36\linewidth-6\tabcolsep\relax}@{}}
\toprule
Method & Principle & Strength & DT advantage \\
\midrule
Perturbation\newline\citep{pan2026flashtrace} & Measure effects of separate masks & Direct intervention & Additive allocation of the joint effect \\
\addlinespace[4pt]
REAGENT\newline\citep{zhao2024reagent} & Iteratively replace tokens and update importance & Black-box attribution & Signed scores without iterative sampling \\
\addlinespace[4pt]
CLP\newline\citep{cifka2023context} & Measure prediction changes as context expands & Reveals newly added information & Allocates the joint effect of selected inputs \\
\addlinespace[4pt]
IFR\newline\citep{ferrando2024information} & Trace contributions to internal representations & Identifies information paths & Includes changes in attention weights \\
\addlinespace[4pt]
AttnLRP\newline\citep{achtibat2024attnlrp} & Propagate output relevance backward & Input and neuron attribution & Explains the measured finite score change \\
\addlinespace[4pt]
FlashTrace\newline\citep{pan2026flashtrace} & Aggregate spans and recursively trace sources & Efficient multi-token attribution & Signed, additive response-score contributions \\
\bottomrule
\end{tabular}
\end{table}

\paragraph{Deletion scores.}
For token attribution scores $A$, DT ranks tokens by $A_i$ for RISE and by $\max(A_i,0)$ for MAS. The evaluator records 21 response scores across 20 deletion steps. Each score sums log-probabilities over the full fixed response plus EOS. Masking all eligible tokens with EOS gives the attribution reference. Full-input and fully-masked scores define the normalization endpoints. The normalized values are clipped to $[0,1]$ and made nonincreasing by a cumulative minimum. For MAS, the remaining attribution share is the positive-score sum over unmasked tokens divided by the initial positive-score sum over eligible tokens. We add its absolute difference from the normalized response score to that score, clip to $[0,1]$, and rescale the resulting curve to $[0,1]$ by its minimum and maximum. MAS is the trapezoidal area under this curve. The case-study curves use the positive ranking and share these endpoints between DT and FT for each model and example.

\paragraph{Recovery targets and budgets.}
For NIAH, DT attributes the full response plus EOS. We select the top 10\% of eligible input tokens ranked by positive-part attribution scores. VT scores the answer tokens and ranks positive body-token scores. Its H2/H4/H6/H10 budgets are 10\%/10\%/20\%/30\%. HotpotQA scores the full response, ranks passage sentences by the sum of their positive token contributions, and selects the longest ranked sentence prefix within 30\% of all body tokens. NIAH/VT report the fraction of annotated evidence tokens recovered. HotpotQA reports Full-support Recall@30\%. Each example scores one if every official supporting fact is selected and zero otherwise. The reported value is the mean across examples.

\begin{table}[htbp]
\centering\small
\caption{Paired development results on the first eight MQ-Q2 and eight MoreHopQA examples per model. This gives 32 model--example pairs. FT uses one recursive hop for RISE/MAS and three for recovery. RISE is signed, while MAS and recovery use positive scores.}
\label{tab:development}
\begin{tabular}{@{}lllrrr@{}}
\toprule
Model & Task & Method & RISE $\downarrow$ & MAS $\downarrow$ & Recovery@10\% $\uparrow$ \\
\midrule
Qwen3-8B & MQ-Q2 & FlashTrace & 0.0685 & 0.1547 & 56.45 \\
 &  & \dt{} & 0.0574 & 0.0806 & 77.52 \\
 & MoreHopQA & FlashTrace & 0.1111 & 0.1969 & --- \\
 &  & \dt{} & 0.0585 & 0.1173 & --- \\
\midrule
Qwen3.5-9B & MQ-Q2 & FlashTrace & 0.0942 & 0.2485 & 63.92 \\
 &  & \dt{} & 0.1115 & 0.1518 & 73.35 \\
 & MoreHopQA & FlashTrace & 0.2356 & 0.2797 & --- \\
 &  & \dt{} & 0.1688 & 0.2426 & --- \\
\bottomrule
\end{tabular}
\end{table}

\section{Implementation Cost Measurements}
\label{app:cost}
DT and one-hop FT use the same inputs and responses in each timing comparison. Time covers the complete attribution call, including returning the source scores. For DT, this includes both forward executions and the reverse pass. Peak allocated device memory includes model weights.

\paragraph{Paired Qwen3 cost.}
Each method has one warm call followed by three measured calls in rotated order on each of 16 development examples. Aggregate times sum the per-example medians.

\paragraph{Response-length comparison.}
The first-page timing panel measures Qwen3-8B on one MetaX C550. Its horizontal axis counts response tokens. Each point is the mean of three complete attribution calls after two warm calls, with a fresh process for each method and length. Error bars show the minimum and maximum measured times.

\clearpage
\section{Detailed Signed Evidence Views}
\begin{figure}[!ht]
\centering
\includegraphics[width=\linewidth]{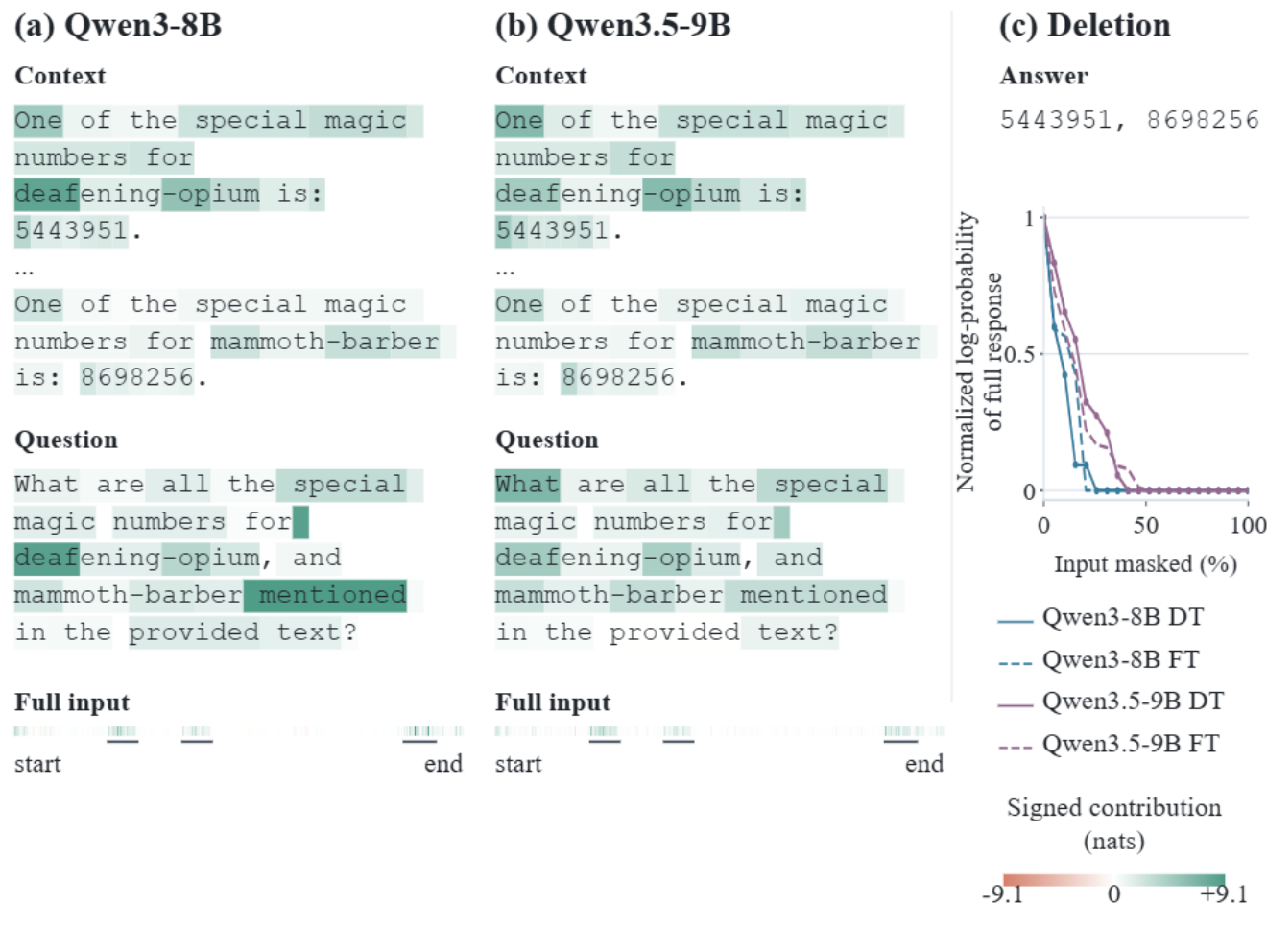}
\caption{\textbf{Signed evidence for retrieving two numbers.} (a,b) The same \texttt{niah\_mq\_q2} input on Qwen3-8B and Qwen3.5-9B. Teal and coral show positive and negative DT contributions in nats. Marks locate excerpts. (c) Each model's DT (solid) and one-hop FT (dashed) curves show the normalized log-probability of the full fixed response plus EOS under 20-step deletion. Color identifies the model. The released normalization uses that model's full-input and fully-masked scores, clipping and a cumulative minimum. DT deletion uses positive ranking. Answer summarizes the full stored response.}
\label{fig:retrieval}
\end{figure}

\begin{figure}[!ht]
\centering
\includegraphics[width=\linewidth]{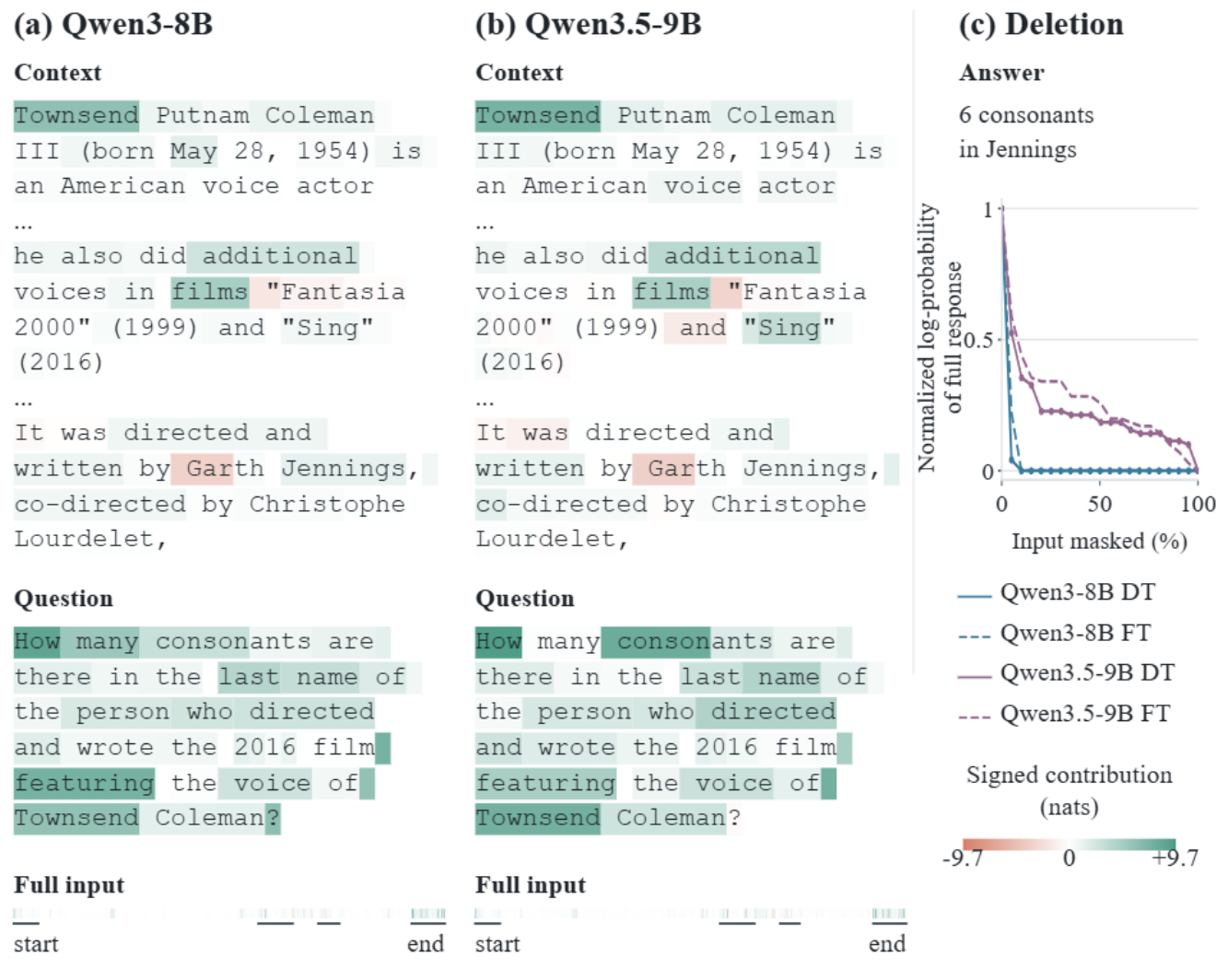}
\caption{\textbf{Signed evidence across a multi-hop question.} (a,b) The MoreHopQA input links Townsend Coleman to \emph{Sing}, Garth Jennings, and his surname's consonants. (c) Normalized log-probability of the full fixed response plus EOS, comparing each model's DT with its own one-hop FT baseline. Model colors, method line styles, and per-model normalization follow Figure~\ref{fig:retrieval}. Both heatmaps share the signed scale within this example.}
\label{fig:multihop}
\end{figure}

\clearpage
\section{Counterfactual Credit and Policy Learning}
\label{app:credit}

Under a fixed continuation policy, the expected return difference between an action and a reference action drawn from that policy equals the action's advantage. This identity expresses the action-independent-baseline principle \citep{sutton1999policy} in counterfactual form. COMA uses the related construction to assign credit by averaging one agent's action while fixing the others \citep{foerster2018counterfactual}.

Let $h$ be a decision history and $\pi$ a fixed continuation policy. Write $G^\pi(h,a;\xi)$ for the integrable return obtained by setting the current action to $a$ and subsequently following $\pi$, with $\xi$ supplying environment and policy randomness. Define
\begin{equation}
Q^\pi(h,a)=\mathbb E_\xi[G^\pi(h,a;\xi)],\qquad
V^\pi(h)=\mathbb E_{a'\sim\pi(\cdot\mid h)}[Q^\pi(h,a')].
\end{equation}
The reference action $a'$ is sampled independently of the selected action $a$, conditional on $h$. The two rollouts may share randomness, with each retaining its correct marginal distribution under the corresponding action intervention.

\begin{proposition}[Exactly averaged counterfactual credit]
Under these definitions, the policy-averaged counterfactual credit satisfies
\begin{equation}
C^\pi(h,a)
=\mathbb E_{a',\xi}\!\left[G^\pi(h,a;\xi)-G^\pi(h,a';\xi)\right]
=Q^\pi(h,a)-V^\pi(h).
\label{eq:credit-advantage}
\end{equation}
For a differentiable policy $\pi_\theta$ with positive probabilities over a fixed finite action set, the score-function learning signals obey
\begin{equation}
\mathbb E_{a\sim\pi_\theta}\!\left[\nabla_\theta\log\pi_\theta(a\mid h)\,C^{\pi_\theta}(h,a)\right]
=\mathbb E_{a\sim\pi_\theta}\!\left[\nabla_\theta\log\pi_\theta(a\mid h)\,Q^{\pi_\theta}(h,a)\right].
\label{eq:credit-gradient}
\end{equation}
\end{proposition}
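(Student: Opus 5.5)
The plan has two parts: prove the credit identity (Equation~\ref{eq:credit-advantage}) by linearity of expectation and conditioning, then obtain the gradient identity (Equation~\ref{eq:credit-gradient}) from the zero-mean property of the score function over a finite action set.

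\textbf{Credit identity.} Fix $h$ and $a$. Integrability of $G^\pi$ lets me split the expectation of the difference into two separate expectations. The first term does not depend on $a'$, so it reduces to $\mathbb E_\xi[G^\pi(h,a;\xi)]=Q^\pi(h,a)$.

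For the second term I condition on $a'$. The definitions require each rollout to keep its correct marginal distribution under its action intervention, even when the two rollouts share randomness. Hence, given $a'$, the conditional law of $G^\pi(h,a';\xi)$ is that of the return under action $a'$, and its conditional mean is $Q^\pi(h,a')$. This holds because $a'$ is drawn independently of $a$ given $h$, so conditioning on $a'$ does not distort the law of $\xi$ relevant to the $a'$ rollout.

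Applying the tower property, the second term equals $\mathbb E_{a'\sim\pi(\cdot\mid h)}[Q^\pi(h,a')]=V^\pi(h)$. Subtracting gives $C^\pi(h,a)=Q^\pi(h,a)-V^\pi(h)$.

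\textbf{Gradient identity.} Substitute $C^{\pi_\theta}=Q^{\pi_\theta}-V^{\pi_\theta}$. Since $V^{\pi_\theta}(h)$ does not depend on $a$, it suffices to show
\[
\mathbb E_{a\sim\pi_\theta}\!\left[\nabla_\theta\log\pi_\theta(a\mid h)\right]=\sum_a\nabla_\theta\pi_\theta(a\mid h)=\nabla_\theta\sum_a\pi_\theta(a\mid h)=\nabla_\theta 1=0.
\]
Each step is justified by the hypotheses. Positivity of the probabilities makes $\log\pi_\theta$ differentiable. Finiteness of the action set permits exchanging the sum and the gradient. Then the baseline term $V^{\pi_\theta}(h)\,\mathbb E_a[\nabla_\theta\log\pi_\theta]$ vanishes, leaving the $Q^{\pi_\theta}$ term.

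\textbf{Main obstacle.} The only delicate step is the conditioning argument in the credit identity when the two rollouts share $\xi$. The fix is to read "correct marginal distribution" as a statement about the conditional law of the return given the action. I would state this reading explicitly, namely that $\mathbb E[G^\pi(h,a';\xi)\mid a']=Q^\pi(h,a')$ almost surely, and invoke it directly. With that made explicit, the gradient part is routine.
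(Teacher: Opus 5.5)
Your proposal is correct and follows the paper's proof. Linearity of expectation gives $C^\pi(h,a)=Q^\pi(h,a)-V^\pi(h)$, and the baseline term $V^{\pi_\theta}(h)\sum_a\nabla_\theta\pi_\theta(a\mid h)$ vanishes because the probabilities sum to one over the finite action set. The paper leaves implicit the reading $\mathbb E[G^\pi(h,a';\xi)\mid a']=Q^\pi(h,a')$ that you state explicitly. That reading, taken from the correct-marginal hypothesis, is what justifies the shared-randomness step; the independence of $a'$ from $a$ does not, since $a$ is fixed in $C^\pi(h,a)$.
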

\begin{proof}
Linearity of expectation gives Equation~\ref{eq:credit-advantage}. The baseline term in Equation~\ref{eq:credit-gradient} is
\begin{equation}
V^{\pi_\theta}(h)\sum_a\pi_\theta(a\mid h)\nabla_\theta\log\pi_\theta(a\mid h)
=V^{\pi_\theta}(h)\nabla_\theta\sum_a\pi_\theta(a\mid h)=0.
\end{equation}
Subtracting this term proves the result. Credit and action value serve as weights in the score-function update.
\end{proof}

This equality provides a concrete target for extending finite attribution to learning. The averaged action credit of a reward-level construction should recover the advantage.

\end{document}